\newtheorem{prop}{Proposition}
\newcommand{\figref}{\figurename~\ref}
\newcommand{\equref}{\eqref}
\newcommand{\tabref}{\tablename~\ref}
\newcommand{\secref}{Session \ref}
\title{Improved training of neural trans-dimensional random field language models with dynamic noise-contrastive estimation}
\name{Bin Wang, Zhijian Ou\thanks{This work is supported by NSFC grant 61473168.}}
\address{Speech Processing and Machine Intelligence (SPMI) Lab, Tsinghua University, Beijing, China. \\
         wangbin12@mails.tsinghua.edu.cn, ozj@tsinghua.edu.cn}
\begin{document}
\ninept
\maketitle
\begin{abstract}
A new whole-sentence language model - neural trans-dimensional random field language model (neural TRF LM), where sentences are modeled as a collection of random fields, and the potential function is defined by a neural network, has been introduced and successfully trained by noise-contrastive estimation (NCE). In this paper, we extend NCE and propose dynamic noise-contrastive estimation (DNCE) to solve the two problems observed in NCE training. First, a dynamic noise distribution is introduced and trained simultaneously to converge to the data distribution. This helps to significantly cut down the noise sample number used in NCE and reduce the training cost. Second, DNCE discriminates between sentences generated from the noise distribution and sentences generated from the interpolation of the data distribution and the noise distribution. This alleviates the overfitting problem caused by the sparseness of the training set. With DNCE, we can successfully and efficiently train neural TRF LMs on large corpus (about 0.8 billion words) with large vocabulary (about 568 K words). Neural TRF LMs perform as good as LSTM LMs with less parameters and being 5x$\sim$114x faster in rescoring sentences. Interpolating neural TRF LMs with LSTM LMs and n-gram LMs can further reduce the error rates.
\end{abstract}
\begin{keywords}
Trans-dimensional Random Field, Noise-contrastive Estimation, Language Models, Speech Recognition
\end{keywords}
\section{Introduction}
\label{sec:intro}

Statistical language models (LMs), which estimate the  probability of a sentence, are an important component in various applications, such as automatic speech recognition (ASR) and machine translation (MT).
Most LMs, including the classical n-gram LMs \cite{chen1999empirical} and the recurrent neural network LMs \cite{mikolov2011}, follow the directed graphical modeling approach, where the probability of a sentence is calculated as the product of local conditionals.
Recently, there are increasing interests in investigating whole-sentence LMs \cite{Bin2015,Bin2017,Bin2017ASRU,Bin2018,huang2018whole}, which directly model the joint probability of a whole sentence without local normalizations.
Typically, trans-dimensional random field (TRF) LMs \cite{Bin2015,Bin2017,Bin2017ASRU,Bin2018} are proposed in the undirected graphical modeling approach,
where sentences are modeled as a collection of random fields and the sentence probability is defined in term of potential functions.
TRF LMs can flexibly support any types of discrete or neural network features.
Specifically, the neural TRF LMs \cite{Bin2017ASRU, Bin2018}, whose potential function is defined by a neural network, have been shown to outperform the classical n-gram LMs significantly, and perform close to LSTM LMs but are computational more efficient in computing sentence probabilities.

Training neural TRF LMs is challenging, especially on large corpus with large vocabulary.
In \cite{Bin2018}, noise-contrastive estimation (NCE) \cite{nce} is introduced to train neural TRF LMs,
by discriminating between real sentences drawn from the data distribution and noise sentences generated from a noise distribution.
However, the NCE training is found to have the following two problems.
First, reliable NCE needs to generate a large number of noise sentences from the noise distribution, especially when the noise distribution is not close to the data distribution.
However, the time and memory cost for gradient calculation are almost linearly increased with the number of noise samples.
In \cite{Bin2018}, the noise distribution is defined by a bigram LM, which is far from the data distribution. For each real sentence, 20 noise sentences are generated from the bigram LM, which is highly undesirable.
Second, the consistency property of NCE holds when an arbitrarily large number of real sentences could be drawn from the true but unknown data distribution.
In practice, real sentences are sampled from the empirical distribution (namely the training set), which is rather sparse considering the high-dimensionality of sentences.
The model estimated by NCE is thus easily overfitted to the empirical distribution.
Due to the two problems, the neural TRF LMs in \cite{Bin2018} are defined in the form of exponential tilting of a reference LSTM LM and consequently loss the advantage of the efficient inference.

In this paper, we propose dynamic noise-contrastive estimation (DNCE), which consists of two extensions beyond of the original NCE algorithm to address the above two problems respectively and thus significantly improves the training of neural TRF LMs.
First, a dynamic noise distribution is introduced and trained simultaneously by minimizing the Kullback-Leibler (KL) divergence between the noise distribution and the data distribution.
With a noise distribution that is close to the data distribution, NCE can achieve reliable model estimation even using a small number of noise sentences.
Second, DNCE discriminates between noise sentences generated from the dynamic noise distribution and sentences generated from the interpolation of the data distribution and the noise distribution.
Intuitively, this increases the size of training set by adding noise sentences (which are asymptotically distributed according to the data distribution) and alleviates the overfitting problem caused by the sparseness of the training set.

Three speech recognition experiments are conducted to evaluate the neural TRF LMs with DNCE training.
First, various LMs are trained on Wall Street Journal (WSJ) portion of Penn Treebank (PTB) English dataset and then used to rescore the 1000-best list generated from the WSJ'92 test set, with the same experimental setup as in \cite{Bin2017, Bin2017ASRU}.
Then LMs are evaluated in the speech recognition experiment on HKUST Chinese dataset \cite{hkust}.
The above two experiments demonstrate the language independence in applying neural TRF LMs.
The neural TRF LMs outperform the classical 5-gram LMs significantly, and perform as good as the LSTM LMs but are computational more efficient (5x to 114x faster) than LSTM LMs even when the vocabulary size is not large (4 K to 10 K).
Finally, to evaluate the scalability of neural TRF LMs and DNCE, we conduct the experiment on the Google one-billion benchmark dataset \cite{google1b}, which contains about 0.8 billion training words with a vocabulary of about 568 K words.
Compared to a large LSTM LM with adaptive softmax \cite{grave2016efficient}, the neural TRF LM achieves a slightly lower WER and is also 54x faster in rescoring the n-best list.
Moreover, combing the neural TRF LMs with LSTM LMs and n-gram LMs can further reduce the error rates.
The source codes of all the experiments can be obtained in \url{https://github.com/wbengine/TRF-NN-Tensorflow}

The rest of the paper is organized as follows.
We discuss related work in \secref{sec:relate} and present basics about neural TRF LMs and NCE in \secref{sec:background}.
The proposed DNCE method is described in \secref{sec:dnce}.
After presenting the three experiments in \secref{sec:exps}, the conclusions are made in \secref{sec:conclusion}.

\section{Related work}
\label{sec:relate}

The NCE method is first proposed in \cite{nce}, and has been used to train conditional neural network (NN) LMs, such as the feedforward NN LMs \cite{vaswani2013decoding} and LSTM LMs \cite{zoph2016simple},
to avoid the expensive softmax calculation by treating the learning as a binary classification problem between the target words and the noise samples.
As storing all the context-dependent normalization constants of NN LMs is infeasible, an approximation is to freeze them to an empirical values, which is 1 in \cite{zoph2016simple,sethy2015unnormalized} and $e^9$ in \cite{chen2015recurrent}.
With the aim to fully utilize dense matrix operations, some studies propose to share the noise samples between target words \cite{zoph2016simple,oualil2017batch}.
Different from the above NCE related studies, DNCE improves NCE in general by introducing a dynamic noise distribution and using the interpolation of the data distribution and the dynamic noise distribution in the discriminator.
Compared with using hundreds of noise samples per data sample in \cite{oualil2017batch}, DNCE  uses at most 4 noise samples per data sample in our experiments.

The idea of whole-sentence LMs is first proposed in \cite{rosenfeld1997whole}, called whole-sentence maximum entropy (WSME) LMs, and further studied in \cite{Chen1999Efficient, rosenfeld2001whole, amaya2001improvement, ruokolainen2010using}.
The empirical results of these previous WSME LMs are not satisfactory, almost the same as traditional n-gram LMs.
Recently, \cite{huang2018whole} follows WSMEs to propose a whole-sentence neural LMs, which use neural network features and NCE training.
However in \cite{huang2018whole}, noise sample generation and log-likelihood evaluation from the noise distribution are not matched but empirically found to work well; 20 noise samples per data sample are generated; the whole sentence neural LMs are trained on a small corpus with at most 24 M words, and interpolated with LSTM LMs; the performance of the whole sentence neural LMs alone is not reported and compared.
Though with encouraging results, the above issues may adversely affect the whole sentence neural LMs.

Different from the class of WSME LMs, a TRF LM is defined as a mixture of random fields for joint modeling sentences of different dimensions (namely lengths), with mixture weights explicitly as the priori length probabilities (See \cite{Bin2017} for detailed comparison).
Hence the model is called a trans-dimensional random field (TRF).
It is worthwhile to review the development of TRF LMs \cite{Bin2015,Bin2017,Bin2017ASRU,Bin2018}.
TRF LMs are first proposed in \cite{Bin2015} and further presented in \cite{Bin2017} with thorough theoretical analysis and systematic evaluation. Both \cite{Bin2015} and \cite{Bin2017} use discrete features and employs the augmented stochastic approximation (AugSA) for model training.
Neural TRF LMs are proposed in \cite{Bin2017ASRU}, by defining the potential function as a deep convolutional neural network (CNN). Model training is performed by AugSA plus JSA (joint stochastic approximation), which introduces an auxiliary distribution to improve the sampling process in AugSA.
In \cite{Bin2018}, NCE is introduced to train TRF LMs, and CNN and LSTM are married to define the potential function.
In this paper, we propose DNCE for improved model training and also simplify the potential definition by using only the bidirectional LSTM.


\section{Background}
\label{sec:background}

\subsection{Neural trans-dimensional random field LMs}
\label{sec:trf}

As in \cite{Bin2018}, the joint probability of a sequence $x^l$ and its length $l$ is assumed to be distributed from an exponential family model:
    \begin{equation}\label{eq:trf}
    p(l, x^l;\theta) = \frac{\pi_l}{Z_l(\theta)} e^{\phi(x^l;\theta)}
    \end{equation}
where $x^l=(x_1, \ldots, x_l)$ is a word sequence of length $l$ ($l=1, \ldots, m$),
$\pi_l$ is the prior length probability,
$\theta$ indicates the set of parameters,
$Z_l(\theta)$ is the normalization constant of length $l$, i.e. $Z_l(\theta) = \sum_{x^l} e^{\phi(x^l; \theta)}$.
$\phi$ is the potential function, which can be defined by neural networks.

In this paper, different from the model definitions in \cite{Bin2017ASRU, Bin2018}, we simplify the neural network architecture and define the potential function by a bidirectional LSTM as shown in \figref{fig:rnn-trf}.
Compared with the bidirectional LSTM LMs in \cite{chen2017investigating, he2016training}, this neural TRF LM provides a theoretical-solid framework to incorporate the bidirectional LSTM features.

\begin{figure}
	\centering
	\includegraphics[width=0.9\linewidth]{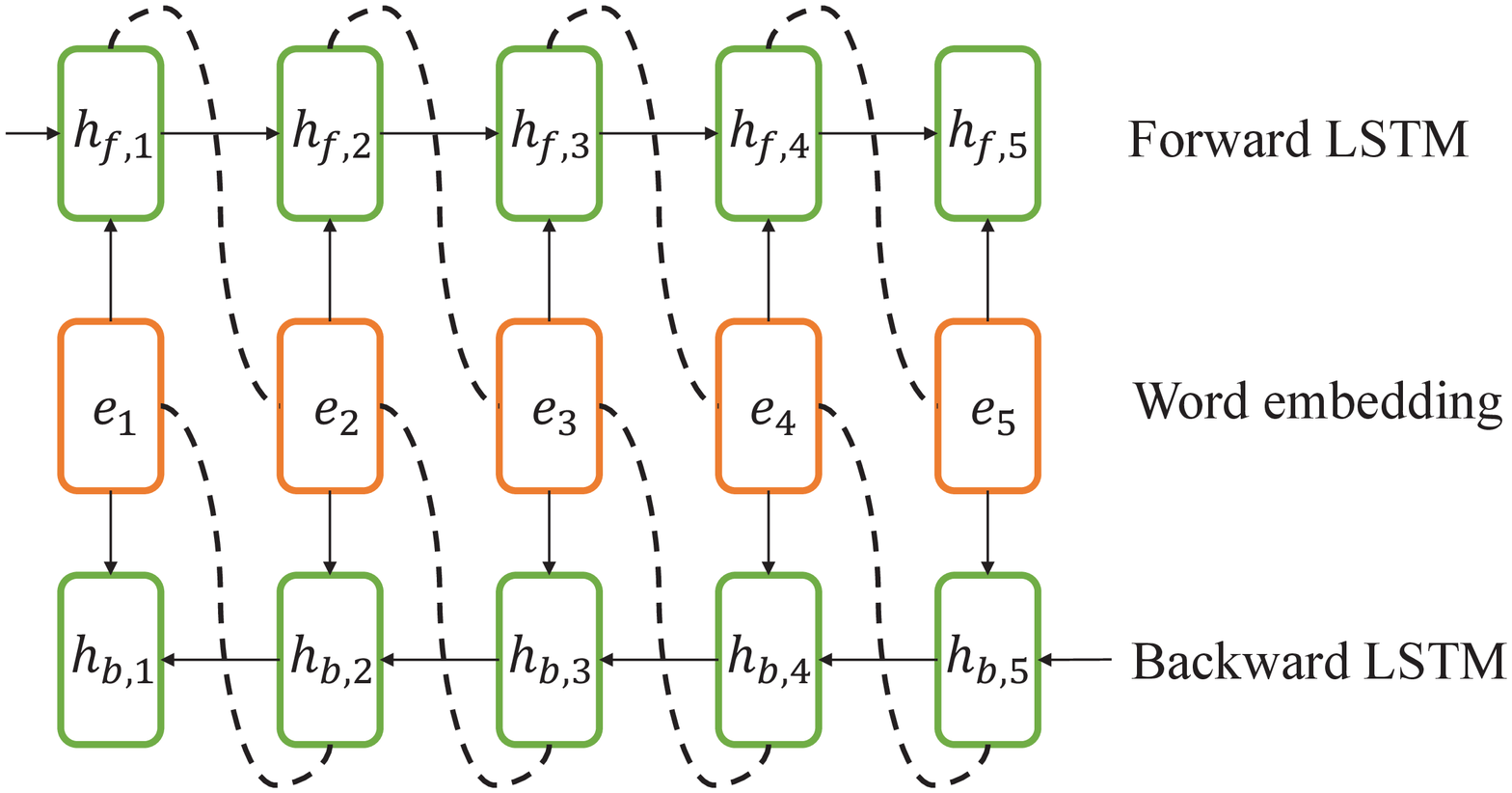}
	\caption{The bidirectional LSTM used to define the potential function  $\phi(x^l;\theta)$ in neural TRFs.}\label{fig:rnn-trf}
	\vskip -0.1in
\end{figure}

The bidirectional LSTM based potential function is detailed as follows.
First, each word $x_i$ ($i=1,\ldots,l$) in a sentence is mapped to an embedded vector $e_i \in R^d$.
Then the word embedding vectors are fed into a bidirectional LSTM to extract the long-range sequential features from the forward and backward contexts.
Denote by $h_{f,i}, h_{b,i} \in R^d$ the hidden vectors of the forward and backward LSTMs respectively at position $i$.
Finally, we calculate the inner product of the hidden vector of the forward LSTM at current position and the embedding vector at the next position,
and calculate the inner product of the hidden vector of the backward LSTM at current position and the embedding vector at the pervious position (dash line in \figref{fig:rnn-trf}).
The potential function $\phi(x^l;\theta)$ is computed by summating all the inner products:
    \begin{equation}\label{eq:phi}
      \phi(x^l;\theta) = \sum_{i=1}^{l-1} h_{f,i}^T e_{i+1} + \sum_{i=2}^{l} h_{b,i}^T e_{i-1}
    \end{equation}
where $\theta$ denotes all the parameters in the neural network.

\subsection{Noise-contrastive estimation (NCE)}

Noise-contrastive estimation (NCE) is proposed in \cite{nce} for learning unnormalized statistical models.
Its basic idea is ``learning by comparison'', i.e. to perform nonlinear logistic regression to discriminate between data samples drawn from the data distribution and noise samples drawn from a known noise distribution.
An advantage of NCE is that the normalization constants can be treated as the normal parameters and updated together with the model parameters.

To apply NCE to estimate neural TRF LMs defined in \eqref{eq:trf}, we treat the logarithmic normalization constants $\log Z_l$, $l=1, \ldots, m$ as parameters and rewrite \eqref{eq:trf} in the following form:
    \begin{equation}\label{eq:trf-nce}
      p(l,x^l;\hat \theta) = \pi_l e^{\hat \phi(l,x^l; \hat \theta)}.
    \end{equation}
Here $\hat \phi(l,x^l;\hat{\theta}) = \phi(x^l;\theta) - \log Z_l$,
and $\hat \theta = (\theta, \log Z_1,$ $\ldots, \log Z_m)$ consists of the parameters of the potential function and the normalization constants, which can be estimated together in NCE.
There are three distributions involved in NCE -- the true but unknown data distribution denoted by $p_d(l,x^l)$, the model distribution $p(l,x^l;\hat\theta)$ in \eqref{eq:trf-nce} and a fixed noise distribution denoted by $p_n(l,x^l)$, which is defined as a bigram LM in \cite{Bin2018}.

Consider the binary classification of a sentence $(l,x^l)$ coming from two classes - from the data distribution ($C=0$) and from the noise distribution ($C=1$), where $C$ is the class label.
Assume that the ratio between the prior probabilities is $1:\nu$, and the class-conditional probability for $C=0$ is modeled by $p(l,x^l;\hat\theta)$.
Then the posterior probabilities can be calculated respectively as follows:
    \begin{align}
    P(C=0|l, x^l; \hat \theta) &= \frac{p(l, x^l; \hat \theta)}{p(l, x^l; \hat \theta) + \nu p_n(l, x^l)}  \label{eq:pc0} \\
    P(C=1|l, x^l; \hat \theta) &= 1 - P(C=0|l, x^l; \hat \theta)  \label{eq:pc1}
    \end{align}
NCE estimates the model distribution by maximizing the following conditional log-likelihood:
    \begin{equation} \label{eq:j}
    \begin{split}
    J(\hat \theta) = \sum_{l=1}^{m} \sum_{x^l} p_d(l,x^l) \log P(C=0|l,x^l;\hat\theta) + \\
                  \nu \sum_{l=1}^{m} \sum_{x^l} p_n(l,x^l)\log P(C=1|l,x^l;\hat\theta)
    \end{split}
    \end{equation}
$J(\hat \theta)$ is the summation of two expectations.
The first is the expectation with respect to (w.r.t.) the data distribution $p_d(l,x^l)$, which can be approximated by randomly selecting sentences from the training set.
The second is the expectation w.r.t. the noise distribution $p_n(l,x^l)$, which can be computed by drawing sentences from the noise distribution itself.

Denote by $D$ and $B$ the data set and the sample set at current iteration,
and by $|D|$ and $|B|$ the number of sentences in $D$ and $B$, respectively, satisfying $\nu = |B|/|D|$.
The gradient with respect to $\hat\theta$ can be computed as follows:
    \begin{equation} \label{eq:grad}
    \begin{split}
    \frac{\partial J(\hat\theta)}{\partial \hat\theta} =
                    \frac{1}{|D|} \sum_{(l,x^l) \in D}P(C=1|l,x^l;\hat\theta) \frac{\partial \hat\phi(l, x^l; \hat\theta)}{\partial \hat\theta} \\
                    - \frac{\nu}{|B|} \sum_{(l,x^l) \in B} P(C=0|l,x^l;\hat\theta) \frac{\partial \hat\phi(l, x^l; \hat\theta)}{\partial \hat\theta}
    \end{split}
    \end{equation}
The gradient of the potential function $\hat\phi(l, x^l;\hat\theta)$ w.r.t. the parameters $\theta$ can be efficiently computed through the back-propagation algorithm.
Then any gradient method can be used to optimize the parameters and normalization constants, such as stochastic gradient descent (SGD) or Adam \cite{adam}.

\section{Dynamic noise-contrastive estimation}
\label{sec:dnce}

The application of NCE to train neural TRF LMs is encouraging as introduced in \cite{Bin2018}. However, there still exist two problems.
First, reliable NCE needs a large $\nu$, especially when the noise distribution is not close to the data distribution.
However, the time and memory cost for gradient calculation in \eqref{eq:grad} are almost linearly increased with $\nu$.
In \cite{Bin2018}, the noise distribution is defined by a bigram LM, which is far from the data distribution. For each real sentence, $\nu=20$ noise sentences are drawn from the bigram LM, which is highly undesirable.
Second, the expectation w.r.t. the data distribution in \eqref{eq:j} is approximated by the expectation w.r.t. the empirical distribution (namely the training set), which is rather sparse considering the high-dimensionality of sentences.
The model estimated by NCE is thus easily overfitted to the empirical distribution.
In this section, we propose dynamic noise-contrastive estimation (DNCE) to address the above two problems respectively.

In DNCE, we define a dynamic noise distribution $p_n(l,x^l;\mu)$ with parameter $\mu$, which is optimized simultaneously by minimizing the KL divergence between the noise distribution and the data distribution, i.e.
    \begin{equation}\label{eq:kl}
      \min_{\mu} KL(p_d||p_n) \Leftrightarrow \max_{\mu} \sum_{l=1}^{m} \sum_{x^l} p_d(l,x^l) \log p_n(l,x^l;\mu)
    \end{equation}
The purpose is to push the noise distribution to be close to the data distribution,
so that we can achieve reliable model estimation even using a small $\nu$.
Then we estimate the model distribution by maximizing the following conditional log-likelihood:
    \begin{equation}\label{eq:j2}
    \begin{split}
      \hat J(\hat \theta) = \sum_{l=1}^{m} \sum_{x^l} q(l,x^l;\mu) \log P(C=0|l,x^l;\hat\theta,\mu) + \\
                  \nu \sum_{l=1}^{m} \sum_{x^l} p_n(l,x^l;\mu)\log P(C=1|l,x^l;\hat\theta,\mu)
    \end{split}
    \end{equation}
where $P(C=0|l,x^l;\hat\theta,\mu)$ and $P(C=1|l,x^l;\hat\theta,\mu)$ are defined as \equref{eq:pc0} \equref{eq:pc1} by replacing the fixed noise distribution $p_n(l,x^l)$ with the dynamic noise distribution $p_n(l,x^l;\mu)$.
$q(l,x^l;\mu) = \alpha p_d(l,x^l) + ( 1 - \alpha) p_n(l,x^l;\mu)$ is the interpolation of the data distribution and the noise distribution, and $0 < \alpha < 1$ is the interpolating factor.
Compared with \eqref{eq:j}, \eqref{eq:j2} optimizes a discriminator between the noise distribution $p_n(l,x^l;\mu)$ and the interpolation of the data distribution $p_d(l,x^l)$ and the noise distribution $p_n(l,x^l;\mu)$.
The following proposition shows the theoretical consistency of DNCE learning in the nonparametric limit.

\begin{prop}
Suppose that an arbitrarily large number of real sentences can be drawn from $p_d(l,x^l)$, and the noise distribution $p_n(l,x^l;\mu)$ and the model distribution $p(l,x^l;\hat\theta)$ have infinite capacity.
Then we have (i) the KL divergence $KL(p_d||p_n)$ in \equref{eq:kl} can be minimized to attain zero. (ii) If $KL(p_d||p_n)$ attains zero at $\mu^*$, and the conditional log-likelihood \equref{eq:j2} attains a maximum at $\hat\theta^*$,
then we have $p(l,x^l;\hat\theta^*) = p_n(l,x^l;\mu^*) = p_d(l,x^l)$.
\end{prop}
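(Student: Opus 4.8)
The plan is to reduce both claims to two standard facts: Gibbs' inequality for the KL divergence, and the consistency theorem for ordinary NCE from \cite{nce}. The only genuinely new ingredient is the observation that forcing the dynamic noise distribution onto the data distribution makes the interpolation $q(l,x^l;\mu)$ collapse back onto $p_d$, so that \eqref{eq:j2} degenerates into the ordinary NCE objective \eqref{eq:j}.

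For (i), I would first rewrite the maximization on the right-hand side of \eqref{eq:kl} as $\sum_{l,x^l} p_d(l,x^l)\log p_n(l,x^l;\mu) = -H(p_d) - KL(p_d\|p_n(\cdot;\mu))$, where $H(p_d)$ is the (fixed) entropy of the data distribution; hence maximizing this cross-entropy over $\mu$ is exactly minimizing $KL(p_d\|p_n)$. Since $KL(p_d\|p_n)\ge 0$ always, and since the infinite-capacity assumption on $p_n$ supplies a parameter $\mu^*$ with $p_n(\cdot;\mu^*)=p_d$ (at which the divergence is $0$), the minimum value $0$ is attained. Moreover, by the equality case of Gibbs' inequality, any minimizer $\mu^*$ must satisfy $p_n(l,x^l;\mu^*)=p_d(l,x^l)$ for all $(l,x^l)$; this identity is what I carry into the second part.

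For (ii), fix a $\mu^*$ with $KL(p_d\|p_n(\cdot;\mu^*))=0$, so $p_n(\cdot;\mu^*)=p_d$ as just noted. Then the interpolation collapses, $q(l,x^l;\mu^*)=\alpha p_d(l,x^l)+(1-\alpha)p_n(l,x^l;\mu^*)=p_d(l,x^l)$, and substituting $q(\cdot;\mu^*)=p_d$ and $p_n(\cdot;\mu^*)=p_d$ into \eqref{eq:j2} turns $\hat J(\hat\theta)$ into precisely the ordinary NCE objective $J(\hat\theta)$ of \eqref{eq:j} with noise distribution equal to $p_d$. From here I would replay the NCE consistency argument: writing $h(l,x^l):=P(C=0|l,x^l;\hat\theta,\mu^*)$ and treating it as a free function on the (countable) sentence space, the objective becomes $\sum_{l,x^l} p_d(l,x^l)\,[\log h(l,x^l)+\nu\log(1-h(l,x^l))]$, each summand strictly concave in $h\in(0,1)$ with unique maximizer $h=1/(1+\nu)$. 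Setting $h(l,x^l)=1/(1+\nu)$ in $h=p(\cdot;\hat\theta)/\big(p(\cdot;\hat\theta)+\nu p_n(\cdot;\mu^*)\big)$ and solving gives $p(l,x^l;\hat\theta^*)=p_n(l,x^l;\mu^*)=p_d(l,x^l)$, which is the claim.

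The main obstacle is the realizability/identifiability bookkeeping in (ii): one has to justify that, although $p(\cdot;\hat\theta)$ is an \emph{un}normalized model, treating the normalization constants $\log Z_l$ as ordinary free parameters makes the family rich enough to contain $p_d$ — exactly the property motivating \eqref{eq:trf-nce} — so that the pointwise optimum $h\equiv 1/(1+\nu)$ is actually attained inside the model class by a genuine maximizer $\hat\theta^*$, and that strict concavity then yields uniqueness of $h$, hence of the induced distribution $p(\cdot;\hat\theta^*)$. Everything else is either Gibbs' inequality or a verbatim transcription of the consistency proof in \cite{nce}.
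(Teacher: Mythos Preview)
Your proposal is correct and follows essentially the same route as the paper: use Gibbs' inequality to get $p_n(\cdot;\mu^*)=p_d$, observe that this collapses the interpolation $q(\cdot;\mu^*)$ to $p_d$, and then invoke the NCE consistency result from \cite{nce} to conclude $p(\cdot;\hat\theta^*)=p_d$. The paper's proof is simply a two-line version of yours that cites \cite{nce} in place of the explicit concavity argument you spell out; your added discussion of realizability via the free normalization constants is a reasonable elaboration that the paper leaves implicit under the ``infinite capacity'' hypothesis.
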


\begin{proof}
  By the property of KL divergence, we have $p_n(l,x^l;\mu^*) = p_d(l,x^l)$.
  By the conclusion of NCE in \cite{nce}, with fixed $\mu^*$, \equref{eq:j2} has the only extremum at
  $p(l,x^l;\hat\theta^*) = q(l,x^l;\mu^*) = \alpha p_d(l,x^l) + ( 1 - \alpha) p_n(l,x^l;\mu^*) = p_d(l,x^l)$.
\end{proof}

About DNCE, we provide the following comments.
\begin{enumerate}
  \item Intuitively, as the noise distribution $p_n$ converges to the data distribution,
      using the interpolated distribution $q(l,x^l;\mu)$ will increase the number of data sentences by adding sampled sentences drawn from the noise distribution. This could avoids the neural TRF model to be overfitted to the sparse training set.
  \item Theoretically, it is feasible to optimize the noise distribution independently and then plug it into \eqref{eq:j2} to estimate the model distribution.
        However, it is found in our experiments that simultaneously optimizing the noise distribution and the model distribution is more stable, which is also theoretically-correct.
  \item In practice, the noise distribution is defined as $p_n(l,x^l;\mu)=\pi_l p_{n,l}(x^l;\mu)$.
        $\pi_l$ is the prior length probability which is the same as in \equref{eq:trf} and usually set to the empirical length distribution.
        This definition ensures that the length distributions of the noise sentences and of the data sentences in the training set are the same.
        $p_{n,l}(x^l;\mu)$ is defined by a simple LSTM LM. It is straightforward to calculate the probability of a whole sentence and draw sentences from this noise distribution.
\end{enumerate}

The DNCE training algorithm is summarized as follows.
At each iteration, $K_D$ data sentences are  drawn from the training set, denoted by $D^{(t)}$,
which is used to update the noise distribution by
    \begin{equation}\label{eq:updatemu}
      \mu^{(t)} = \mu^{(t-1)} + \gamma_{\mu} \left\{ \frac{1}{K_D} \sum_{(l,x^l) \in D^{(t)}} \frac{\partial}{\partial \mu} \log p_n(l,x^l;\mu) \right\},
    \end{equation}
where $\mu^{(t)}$ and $\mu^{(t-1)}$ denote the estimated parameter $\mu$ of the noise distribution at current iteration $t$ and previous iteration $t-1$ respectively, and $\gamma_{\mu}$ is the learning rate.
Then, two sets of noise sentences are generated from the noise distribution $p_n(l,x^l; \mu^{(t)})$, denoted by $B^{(t)}_1$ and $B^{(t)}_2$,
whose sizes satisfy $|B^{(t)}_1| = \frac{1-\alpha}{\alpha} K_D$ and $|B^{(t)}_2| = \frac{\nu}{\alpha} K_D$ respectively.
As a result, the union of $D^{(t)}$ and $B^{(t)}_1$ can be viewed as the samples drawn from the interpolated distribution $q(l,x^l;\mu) = \alpha p_d(l,x^l) + ( 1 - \alpha) p_n(l,x^l;\mu)$.
The update of model parameter $\hat\theta$ is as follows:
    \begin{equation}\label{eq:updatetheta}
    \small
      \hat\theta^{(t)} = \hat\theta^{(t-1)} + \gamma_{\hat\theta} Adam \left\{  g^{(t)}(\hat\theta) \right\},
    \end{equation}
where $g^{(t)}(\hat\theta)$ is the gradient of the conditional log-likelihood \eqref{eq:j2} w.r.t. $\hat\theta$ at current iteration $t$, i.e.
    \begin{equation}\label{eq:g_theta}
    \begin{split}
       g^{(t)}(\hat\theta) = \frac{\alpha}{K_D} \sum_{(l,x^l) \in D^{(t)} \bigcup B^{(t)}_1} P(C=1|l,x^l;\hat\theta,\mu) \frac{\partial \hat\phi(l,x^l; \hat\theta)}{\partial \hat\theta} \\
                    - \frac{\alpha}{K_D} \sum_{(l,x^l) \in B^{(t)}_2} P(C=0|l,x^l;\hat\theta,\mu) \frac{\partial \hat\phi(l,x^l; \hat\theta)}{\partial \hat\theta}.
    \end{split}
    \end{equation}
$\hat\theta^{(t)}$ and $\hat\theta^{(t-1)}$ denote the estimated $\hat\theta$ at current iteration $t$ and previous iteration $t-1$ respectively, $Adam$ denotes the Adam method \cite{adam} and $\gamma_{\hat\theta}$ is the learning rate.
As $\hat\theta = (\theta, \log Z_1, \ldots, \log Z_m)$, the parameters of the potential function and the normalization constants of neural TRF LMs can be jointly estimated in DNCE.

\section{Experiments}
\label{sec:exps}

Three speech recognition experiments are conducted to evaluate the neural TRF LMs with DNCE training, all in the form of rescoring using different LMs.
First, LMs are trained on Wall Street Journal (WSJ) portion of Penn Treebank (PTB), and evaluated by word error rate (WER) on WSJ'92 test data, with the same experimental setup as in \cite{Bin2017, Bin2017ASRU}.
We compare the performance of NCE and DNCE and find that DNCE is able to avoid the overfitting problem and achieves a lower WER with a small sample number $\nu = 1$.
Then, we conduct the experiment on HKUST Chinese dataset to examine the language independence in applying neural TRF LMs.
Finally, neural TRF LMs are trained on Google one-billion word corpus\footnote{
https://github.com/ciprian-chelba/1-billion-word-language-modeling-benchmark},
which contains about 0.8 billion words with a vocabulary of about 568 K words.
The CPUs used in the following experiments are Intel Xeon E5 (2.00 GHz) and the GPUs are NVIDIA GeForce GTX 1080Ti.

\subsection{Neural TRF LMs on PTB dataset}
\label{sec:exp-ptb}

\begin{table*}[t]
\centering
\begin{tabular}{l|c|c|c|l|l}
		\hline
		Model               &   PPL     &   WER (\%)     &   \#param (M)  & Training time  & Inference time \\
		\hline
		KN5                 &   141.2  &   8.78        &   2.3        & 22 seconds (1 CPU)  &  0.06 seconds (1 CPU) \\
		LSTM-2$\times$200   &   113.9   &   7.96    &   4.6          & 1.7 hours (1 GPU)   &  6.36 seconds (1 GPU) \\
		LSTM-2$\times$650   &   84.1    &   7.66    &   19.8        & 7.5 hours (1 GPU)   &  6.36 seconds (1 GPU) \\
		LSTM-2$\times$1500  &   78.7    &   7.36    &   66.0        & 1 day (1 GPU)       &  9.09 seconds (1 GPU) \\
        \hline
		discrete TRF in \cite{Bin2017}  & $\geq$130    &   7.90    &   6.4        & 1 day (8 CPUs)    &  0.16 seconds (1 CPU) \\
		neural TRF in \cite{Bin2017ASRU}             &   $\geq$37  &   7.60   &   4.0      & 3 days (1 GPU)      &  0.40 seconds (1 GPU) \\
        \hline
		this paper            &   $\sim$66  &   7.40   &   2.6      &  1 days (1 GPU)     &  0.08 seconds (1 GPU)\\
		\hline
\end{tabular}
\caption{Speech recognition results of various LMs, trained on WSJ portion of PTB dataset.
        ``PPL'' is the perplexity on PTB test set.
		``WER'' is the rescoring word error rate on WSJ'92 test data.
		``\#param'' is the number of parameter numbers (in millions).
        ``Training time'' denotes the total training time for a LM.
		``Inference time'' denotes the average time of rescoring the n-best list for each utterance.}
\label{tab:ptb}
\end{table*}

\begin{table*}
  \centering
  \begin{tabular}{l|c|c|c|l|l}
		\hline
		Model               & Valid (\%)  &   Test (\%)     &   \#param (M)  & Training time  & Inference time \\
		\hline
		KN5                 &  27.69   &  28.48  &  3.5  &  8 seconds (1 CPU) & 0.004 second (1 CPU) \\
        LSTM-2$\times$200          &  26.98   &  27.60  &  2.2  &  0.5 hour (1 GPU) & 0.048 second (1 GPU) \\
		\hline
        neural TRF       &   26.32  &   27.72   &   1.4      &  1 day (1 GPU) &  0.009 second (1 GPU)\\
        \hline
        KN5$+$LSTM     &  26.53  &   27.36  & & & \\
        KN5$+$nerual TRF  & 26.32  & 27.30 & & &  \\
        LSTM$+$neural TRF &  25.89  &   26.91   & & & \\
        KN5$+$LSTM$+$neural TRF & 25.96 & 26.87  & & & \\
        \hline
  \end{tabular}
  \caption{Speech recognition results on HKUST Chinese dataset.
           ``Valid'' is the character error rate (CER) on the valid set and ``Test'' is the CER on the test set.
           ``$+$'' denotes the log-linear interpolation with equal weights.
       Other columns has the same meanings as in \tabref{tab:ptb}.}
  \label{tab:hkust}
  \vspace{-10pt}
\end{table*}

In this section, the LM training corpus is the Wall Street Journal (WSJ) portion of Penn Treebank (PTB) dataset.
Section 0-20 are used as the training set (about 930 K words), session 21-22 as the development set (about 74 K words) and section 23-24 as the test set (about 82 K words).
The vocabulary is limit to 10 K words, including a special token ``$\langle$unk$\rangle$'' denoting the word not in the vocabulary.
For evaluation in terms of speech recognition WER,
various LMs trained on PTB training and development sets are applied to rescore the 1000-best lists from recognizing WSJ'92 test data (330 utterances).
For each utterance, the 1000-best list of candidate sentences are generated by the first-pass recognition using the Kaldi toolkit\footnote{http://kaldi.sourceforge.net/} with a DNN-based acoustic models.
The oracle WER of the 1000-best list is 0.93\%.
This setting is the same as that used in \cite{Bin2017, Bin2017ASRU}.

\begin{figure}[t]
\begin{minipage}[b]{0.48\linewidth}
  \centering
  \centerline{\includegraphics[width=\linewidth]{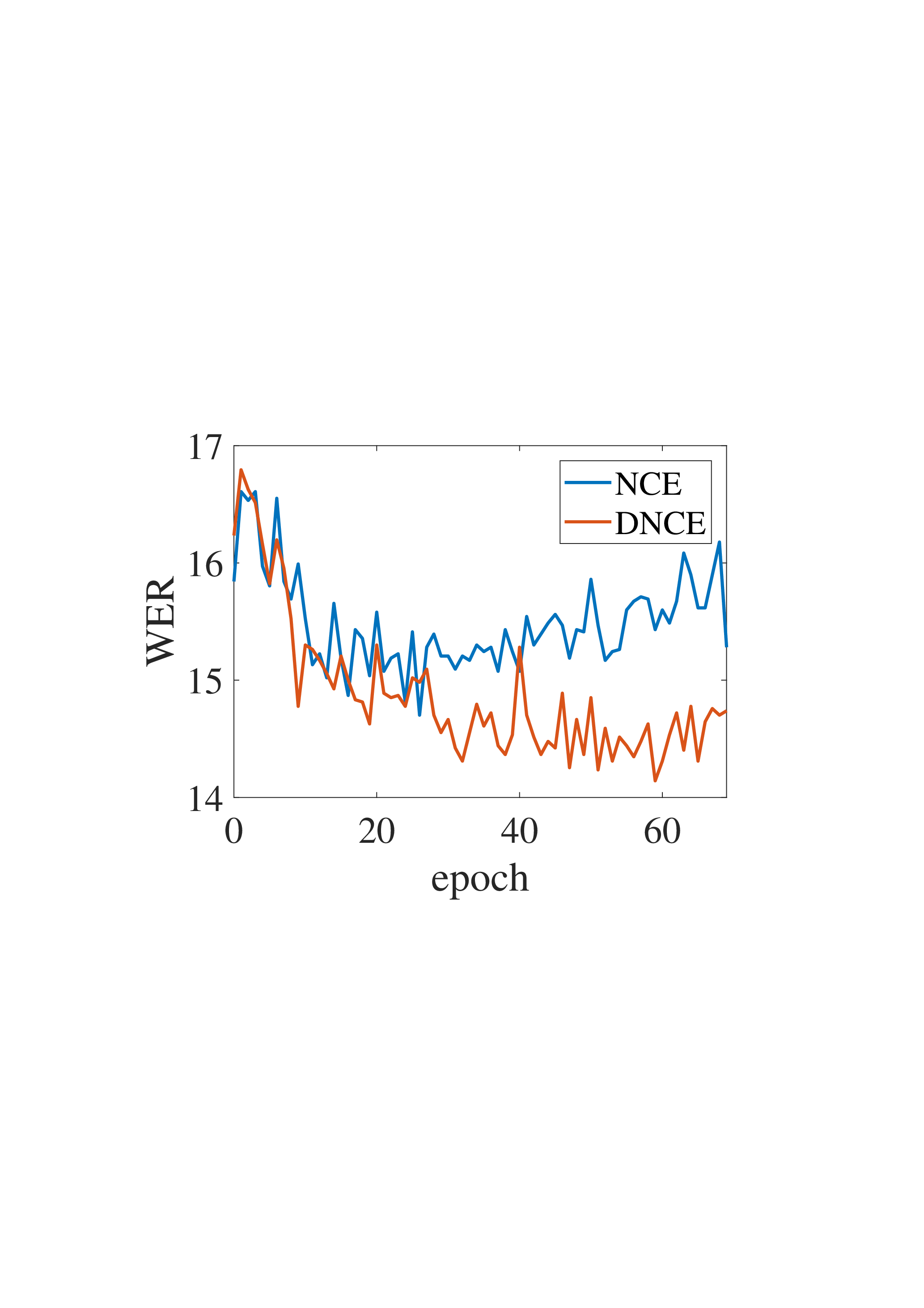}}
  \centerline{(a)}
\end{minipage}
\hfill
\begin{minipage}[b]{0.48\linewidth}
  \centering
  \centerline{\includegraphics[width=\linewidth]{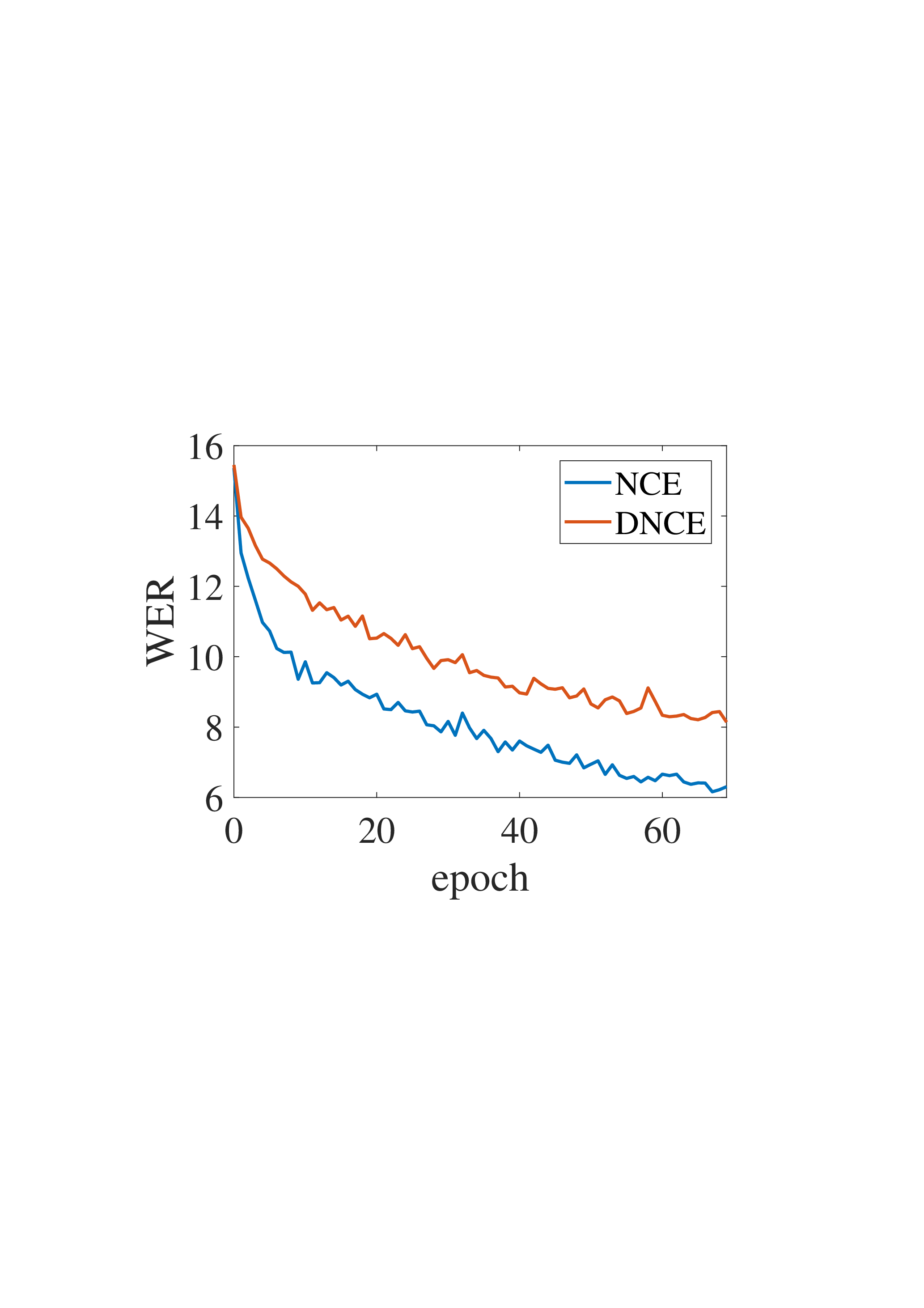}}
  \centerline{(b)}
\end{minipage}
\caption{(a) Rescoring WERs on the WSJ'92 1000-best list calculated with only LM scores.
(b) Rescoring WERs on the fake n-best list calculated with only LM scores.}
\label{fig:ptb}
\vspace{-10pt}
\end{figure}

A neural TRF LM defined in \secref{sec:trf} is trained by DNCE method, with the embedding size $d=200$ and the bidirection LSTM containing 1 hidden layer and $d=200$ hidden units.
The maximum length in the neural TRF LM is $m=82$, which is equal to the maximum length of sentences in the training and development set.
The noise distribution $p_n(l,x^l;\mu)$ is defined by a simple LSTM LM with 1 hidden layer and 200 hidden units.
At each iteration, DNCE draws $K_D=100$ data sentence from the training set as $D^{(t)}$ to update the noise distribution $p_n(l,x^l;\mu)$ based on \eqref{eq:updatemu},
and then generates two sample set of size $\frac{1-\alpha}{\alpha} K_D=100$ and $\frac{\nu}{\alpha} K_D=200$ (i.e. $\alpha=0.5$, $\nu=1$) from the noise distribution as $B^{(t)}_1$ and $B^{(t)}_2$ to update the neural TRF LM based \eqref{eq:updatetheta}.
The learning rate in \eqref{eq:updatemu} and \eqref{eq:updatetheta} are set to $\gamma_{\mu} = 0.01$ and $\gamma_{\hat\theta}=0.1$ and fixed during the training process.
All the NN parameters in the neural TRF LM and the noise LSTM LM are initialized randomly within an interval from -0.1 to 0.1 and the normalization constants of neural TRF LMs are initialized to $\log Z_l = l \times |V|$ ($l=1,\dots,m$), where $|V|$ is the vocabulary size.
We stop the training once the log-likelihood on the PTB development set does not increase significantly, resulting in 70 training epoches.

To compare the NCE and DNCE training method,
we create a fake n-best list, by randomly selecting 3000 data sentences from the training set and randomly introducing several substitution, insertion or deletion errors in each sentence.
LMs are used to rescore the fake n-best list and the WSJ'92 1000-best list, and sentences with the largest LM probabilities are treated as the recognition results and used to calculate the word error rates (WERs).
This is to exclude the influence of the acoustic models and mainly to evaluate the performance of LMs.
The convergence of WERs on the fake n-best list and the WSJ'92 1000-best list are shown in \figref{fig:ptb}.
On the fake nbest-list (\figref{fig:ptb}(b)), the WER of NCE reduces faster that DCNE.
However NCE performs worse than DNCE in the real test set - WSJ'92 1000-best list (\figref{fig:ptb}(a)),
and the WER of NCE begins to increase after about 20 training epoches.
In contrast, DNCE effectively avoid the overfitting problem by adding the noise sentences into the data sentences in \eqref{eq:j2}, and as a result DNCE achieves a lower WER on the real test set (\figref{fig:ptb}(a)).

The final results of various LMs are shown in \tabref{tab:ptb}, including a 5-gram LM with modified Kneser-Ney smoothing \cite{chen1999empirical} (denoted by ``KN5'') and three LSTM LMs with 2 hidden layers and 200, 600, 1500 hidden units per layer respectively (denoted by ``LSTM-2$\times$200'', ``LSTM-2$\times$600'' and ``LSTM-2$\times$1500''), which correspond to the small, medium and large LMs in \cite{lstmdropout}.
As the vocabulary size is not large, all LSTM LMs use the standard softmax output layers to ensure the best model performance.
We also show the results of TRF LMs in the previous studies in \cite{Bin2017, Bin2017ASRU}.
The conclusions of this experiments are summarized as follows.

First, the proposed DNCE method successfully train the neural TRF LM with a small noise sample number $\nu=1$.
Compared to the neural TRF LM in \cite{Bin2017ASRU}, which is trained by the AugSA method, DNCE reduces the training time from 3 days to 1 days and reduces the WER from 7.60\% to 7.40\%.
Compared to the NCE used in \cite{Bin2018}, DNCE successfully reduces the noise sample number from $\nu=20$ to $\nu=1$.
Second, the neural TRF LM outperforms the 5gram LMs significantly with about 15.7\% relative WER reduction.
Finally, compared with the large LSTM LM with 2 hidden layers and 1500 hidden units per layer (denoted by ``LSTM-2$\times$1500''),
the neural TRF LM achieves a close WER with only about 4.0\% parameters and is 114x faster in rescoring sentences in n-best lists.

\begin{table*}
  \centering
  \begin{tabular}{l|c|c|l|l}
		\hline
		Model               &   WER (\%)     &   \#param (M)  & Training time  & Inference time \\
		\hline
		KN5                 &  6.13  &  133  &  2.5 hours (1 CPU) & 0.491 second (1 CPU) \\
        LSTM-2$\times$1024         &  5.55  &  191  &  6 days (2 GPUs) & 0.909 second (2 GPUs) \\
		\hline
        neural TRF          &   5.47   &   114      &  14 days (2 GPUs) &  0.017 second (1 GPU)\\
        \hline
        KN5$+$LSTM     &   5.38  & & & \\
        KN5$+$nerual TRF  & 5.51 & & &  \\
        LSTM$+$neural TRF &   5.25   & & & \\
        KN5$+$LSTM$+$neural TRF & 5.06  & & & \\
        \hline
  \end{tabular}
  \caption{Speech recognition results of various LMs trained on Google one-billion benchmark.
 ``$+$'' denotes the log-linear interpolation with equal weights.
  Other columns has the same meanings as in \tabref{tab:ptb}.}
  \label{tab:google1b}
  \vspace{-10pt}
\end{table*}

\subsection{Nerual TRF LMs on HKUST dataset}
\label{sec:exp-hkust}

In this section, we perform the speech recognition experiments on HKUST Chinese dataset \cite{hkust}.
Various character-based LMs are trained on the training corpus consisting of about 2.4M characters, with a vocabulary of about 4000 Chinese characters,
and are then used to rescore the 100-best list generated by the Kaldi scripts based on a LF-MMI \cite{lfmmi} acoustic model.
From all recognized utterances, we randomly select 1082 (about 20\%) utterances as the valid set and treat the rest 4331 utterances as the test set.
All the hyper-parameters including the learning rate and the training epoch number are tuned on the valid set.

The configurations of neural TRF LMs and the DNCE training method are the same to the experiment in \secref{sec:exp-ptb},
except that the maximum length is set to $m=43$ and the noise sample number is increased from $\nu=1$ to $\nu=4$.
The results of the our neural TRF LM and the baseline LMs are shown in \tabref{tab:hkust}, from which there are several comments.
First, our neural TRF LM outperforms the classical 5-gram LM (denoted by ``KN5'') with relative CER reduction 2.7\%, and performs close to ``LSTM-2$\times$200'' (a LSTM LM with 2 hidden layers, 200 hidden units per layer and a standard softmax output layer) with 63\% parameters.
Second, even though the vocabulary in this experiment is small, containing only about 4000 characters, our neural TRF LM is still 5 times faster than LSMT LM in rescoring sentences.
Third, the lowest WER 26.87\% is achieved by combining the neural TRF LM with the 5-gram LM and the LSTM LM (denoted by ``KN5$+$LSTM$+$neural TRF'').
Finally, together with the previous experiment on PTB dataset, these results demonstrate the language independence in applying neural TRF LMs.

\subsection{Neural TRF LMs on Google one-billion word benchmark}

In this section, we examine the scalability of neural TRF LMs on Google one-billion word benchmark.
The training set contains about 0.8 billion words.
We map the words whose counts less than 4 to the token ``$\langle$unk$\rangle$'', and obtain a vocabulary of about 568 K words.
Various LMs trained on the training set are used to rescore the 1000-best list of the WSJ'92 test set, which is same as in \secref{sec:exp-ptb}.

Training LMs on corpus with large vocabulary is challenging.
For n-gram LMs, a 5-gram LM with cutoff setting of ``00225'' (denoted by ``KN5'') is trained using the SRILM toolkit\footnote{http://www.speech.sri.com/projects/srilm/} by separately counting over the split training files and then merging the counts together.
A LSTM LM which uses the embedding size of 256, 2 hidden layers and 1024 hidden units per layer (denoted by ``LSTM-2$\times$1024'') is trained using the adaptive softmax strategy proposed in \cite{grave2016efficient}.
Adam \cite{adam} is used to train the LSTM LM with a learning rate initialized to $10^{-3}$ and halved per epoch.
The final training epoch number is 4.

A neural TRF LM defined in \secref{sec:trf}, which uses the embedding size $d=200$ and the bidirectional LSTM containing 1 hidden layer and $d=200$ hidden units, is trained by DNCE.
The maximum length of the neural TRF LM is set to $m=60$ and sentences in the training set longer than 60 are omitted.
The noise distribution $p_n(l,x^l;\mu)$ is defined by a simple LSTM LM with 1 hidden layers and 200 hidden units.
As the direct softmax calculation on the whole vocabulary is infeasible in this experiment, we introduce the shortlist strategy, which uses the LSTM to predict the first 10 K frequent words and uses a unigram to predict the rest of words.
For DNCE, the interpolation factor is set to $\alpha=2/3$ and the noise sample number is set to $\nu=4$.
At each iteration, $K_D=100$ data sentences are drawn from the training set as $D^{(t)}$ to update the noise distribution $p(l,x^l;\mu)$ based on \eqref{eq:updatemu}.
Two sample sets of size $\frac{1-\alpha}{\alpha} K_D=50$ and $\frac{\nu}{\alpha} K_D=600$ are drawn respectively from the noise distribution as $B^{(t)}_1$ and $B^{(t)}_2$,  and used to update the neural TRF LM based on \eqref{eq:updatetheta}.
The settings of learning rates and the initial parameters are the same as that in \secref{sec:exp-ptb}.
We stop the training once the log-likelihood on the development set does not increase significantly, resulting in 5 training epoches.

The speech recognition WERs are shown in \tabref{tab:google1b}.
We have the following conclusions.
\begin{itemize}
  \item DNCE exhibits the capability to handle large corpus with large vocabulary. With a simple noise LM and a small noise sample number ($\nu=4$), DNCE can be used to train neural TRF LMs effectively on Google one-billion word benchmark corpus with 0.8 billion words and a vocabulary of about 568 K words.
  \item The neural TRF LM trained by DNCE outperforms the classical 5-gram LM with modified Kneser-Ney smoothing \cite{chen1999empirical} (denoted by ``KN5'') with relative WER reduction 10.8\%, and performs slightly better than ``LSTM-2$\times$1024'' with relative WER reduction 1.4\%.
  \item The neural TRF LM shows its distinctive advantage in rescoring sentences, which is about 54 times faster than the LSTM LM with adaptive softmax.
  \item The lowest WER 5.06\% is achieved by combining the neural TRF LM with the 5-gram LM and the LSTM LM, with 17.5\% relative WER reduction over ``KN5'' and 5.9\% over ``KN5+LSTM''.
\end{itemize}

\section{Conclusions}
\label{sec:conclusion}

In this paper, we further investigate the training methods of neural TRF LMs, and propose the dynamic noise-contrastive estimation (DNCE).
The following improvements enable the successful and efficient training of neural TRF LMs on Google one-billion word benchmark, which contains about 0.8 billion English words with a vocabulary of about 568 K.
\begin{itemize}
  \item Instead of using a fixed noise distribution in NCE, a dynamic noise distribution is introduced in DNCE and  trained simultaneously by minimizing the KL divergence between the noise distribution and the data distribution, in addition to training the model.
  This helps to significantly cut down the noise sample number and reduce the training cost.
  \item DNCE discriminates between sentences generated from the noise distribution and sentences generated from the interpolation of the data distribution and the noise distribution.
  Using the interpolated distribution alleviates the overfitting problem caused by the sparseness of the training set.
\end{itemize}

There are some interesting directions for future research.
First, based on the success of neural TRFs in language modeling and speech recognition,
it is worthwhile to investigate the application of neural TRFs in other sequential and trans-dimensional data modeling tasks.
Second, although we mainly apply DNCE to neural TRF model training in this paper, it can be seen that DNCE is a general improvement over NCE by introducing a dynamic noise distribution and using the interpolation of the data distribution and the dynamic noise distribution in the discriminator.
Therefore, DNCE can be applied in other tasks wherever NCE is used.
Using DNCE could significantly reduce the noise sample number and alleviate the overfitting problem.

\bibliographystyle{IEEEbib}
\bibliography{RF}

\begin{thebibliography}{10}

\bibitem{chen1999empirical}
Stanley~F. Chen and Joshua Goodman,
\newblock ``An empirical study of smoothing techniques for language modeling,''
\newblock {\em Computer Speech \& Language}, vol. 13, pp. 359--394, 1999.

\bibitem{mikolov2011}
Tomas Mikolov, Stefan Kombrink, Lukas Burget, Jan~H Cernocky, and Sanjeev
  Khudanpur,
\newblock ``Extensions of recurrent neural network language model,''
\newblock in {\em Proc. International Conference on Acoustics, Speech and
  Signal Processing (ICASSP)}, 2011.

\bibitem{Bin2015}
Bin Wang, Zhijian Ou, and Zhiqiang Tan,
\newblock ``Trans-dimensional random fields for language modeling,''
\newblock in {\em Proc. Annu. Meeting of the Association for Computational
  Linguistics (ACL)}, 2015.

\bibitem{Bin2017}
Bin Wang, Zhijian Ou, and Zhiqiang Tan,
\newblock ``Learning trans-dimensional random fields with applications to
  language modeling,''
\newblock {\em IEEE Transactions on Pattern Analysis and Machine Intelligence
  (PAMI)}, 2017.

\bibitem{Bin2017ASRU}
Bin Wang and Zhijian Ou,
\newblock ``Language modeling with neural trans-dimensional random fields,''
\newblock in {\em IEEE Automatic Speech Recognition and Understanding
  Workshop}, 2017.

\bibitem{Bin2018}
Bin Wang and Zhijian Ou,
\newblock ``Learning neural trans-dimensional random field language models with
  noise-contrastive estimation,''
\newblock {\em IEEE International Conference on Acoustics, Speech and Signal
  Processing}, 2018.

\bibitem{huang2018whole}
Yinghui Huang, Abhinav Sethy, Kartik Audhkhasi, and Bhuvana Ramabhadran,
\newblock ``Whole sentence neural language model,''
\newblock {\em IEEE International Conference on Acoustics Speech and Signal
  Processing (ICASSP)}, 2018.

\bibitem{nce}
Michael Gutmann and Aapo Hyv{\"a}rinen,
\newblock ``Noise-contrastive estimation: A new estimation principle for
  unnormalized statistical models,''
\newblock in {\em Proceedings of the Thirteenth International Conference on
  Artificial Intelligence and Statistics}, 2010.

\bibitem{hkust}
Yi~Liu, Pascale Fung, Yongsheng Yang, Christopher Cieri, Shudong Huang, and
  David Graff,
\newblock ``Hkust/mts: A very large scale mandarin telephone speech corpus,''
\newblock in {\em Chinese Spoken Language Processing}. 2006.

\bibitem{google1b}
Ciprian Chelba, Tomas Mikolov, Mike Schuster, Qi~Ge, Thorsten Brants, Phillipp
  Koehn, and Tony Robinson,
\newblock ``One billion word benchmark for measuring progress in statistical
  language modeling,''
\newblock {\em arXiv preprint arXiv:1312.3005}, 2013.

\bibitem{grave2016efficient}
Edouard Grave, Armand Joulin, Moustapha Ciss{\'e}, David Grangier, and
  Herv{\'e} J{\'e}gou,
\newblock ``Efficient softmax approximation for gpus,''
\newblock {\em arXiv preprint arXiv:1609.04309}, 2016.

\bibitem{vaswani2013decoding}
Ashish Vaswani, Yinggong Zhao, Victoria Fossum, and David Chiang,
\newblock ``Decoding with large-scale neural language models improves
  translation.,''
\newblock in {\em EMNLP}, 2013.

\bibitem{zoph2016simple}
Barret Zoph, Ashish Vaswani, Jonathan May, and Kevin Knight,
\newblock ``Simple, fast noise-contrastive estimation for large rnn
  vocabularies.,''
\newblock in {\em HLT-NAACL}, 2016.

\bibitem{sethy2015unnormalized}
Abhinav Sethy, Stanley Chen, Ebru Arisoy, and Bhuvana Ramabhadran,
\newblock ``Unnormalized exponential and neural network language models,''
\newblock in {\em International Conference on Acoustics, Speech and Signal
  Processing (ICASSP)}, 2015.

\bibitem{chen2015recurrent}
Xie Chen, Xunying Liu, Mark~JF Gales, and Philip~C Woodland,
\newblock ``Recurrent neural network language model training with noise
  contrastive estimation for speech recognition,''
\newblock in {\em IEEE International Conference on Acoustics Speech and Signal
  Processing (ICASSP)}, 2015.

\bibitem{oualil2017batch}
Youssef Oualil and Dietrich Klakow,
\newblock ``A batch noise contrastive estimation approach for training large
  vocabulary language models,''
\newblock {\em arXiv preprint arXiv:1708.05997}, 2017.

\bibitem{rosenfeld1997whole}
Ronald Rosenfeld,
\newblock ``A whole sentence maximum entropy language model,''
\newblock in {\em Proc. Automatic Speech Recognition and Understanding (ASRU)},
  1997.

\bibitem{Chen1999Efficient}
Stanley~F. Chen and Ronald Rosenfeld,
\newblock ``Efficient sampling and feature selection in whole sentence maximum
  entropy language models,''
\newblock in {\em Proc. International Conference on Acoustics, Speech and
  Signal Processing (ICASSP)}, 1999.

\bibitem{rosenfeld2001whole}
Ronald Rosenfeld, Stanley~F. Chen, and Xiaojin Zhu,
\newblock ``Whole-sentence exponential language models: a vehicle for
  linguistic-statistical integration,''
\newblock {\em Computer Speech \& Language}, vol. 15, pp. 55--73, 2001.

\bibitem{amaya2001improvement}
Fredy Amaya and Jos{\'e}~Miguel Bened{\'\i},
\newblock ``Improvement of a whole sentence maximum entropy language model
  using grammatical features,''
\newblock in {\em Proc. Ann. Meeting of the Association for Computational
  Linguistics (ACL)}, 2001.

\bibitem{ruokolainen2010using}
Teemu Ruokolainen, Tanel Alum{\"a}e, and Marcus Dobrinkat,
\newblock ``Using dependency grammar features in whole sentence maximum entropy
  language model for speech recognition.,''
\newblock in {\em Baltic HLT}, 2010.

\bibitem{chen2017investigating}
Xie Chen, Anton Ragni, Xunying Liu, and Mark~JF Gales,
\newblock ``Investigating bidirectional recurrent neural network language
  models for speech recognition,''
\newblock {\em Proc. ICSA INTERSPEECH}, 2017.

\bibitem{he2016training}
Tianxing He, Yu~Zhang, Jasha Droppo, and Kai Yu,
\newblock ``On training bi-directional neural network language model with noise
  contrastive estimation,''
\newblock in {\em Chinese Spoken Language Processing (ISCSLP)}, 2016.

\bibitem{adam}
Diederik Kingma and Jimmy Ba,
\newblock ``Adam: A method for stochastic optimization,''
\newblock {\em arXiv:1412.6980 [cs.LG]}, 2014.

\bibitem{lstmdropout}
Wojciech Zaremba, Ilya Sutskever, and Oriol Vinyals,
\newblock ``Recurrent neural network regularization,''
\newblock {\em arXiv preprint arXiv:1409.2329}, 2014.

\bibitem{lfmmi}
Daniel Povey, Vijayaditya Peddinti, Daniel Galvez, Pegah Ghahremani, Vimal
  Manohar, Xingyu Na, Yiming Wang, and Sanjeev Khudanpur,
\newblock ``Purely sequence-trained neural networks for asr based on
  lattice-free mmi.,''
\newblock in {\em Interspeech}, 2016.

\end{thebibliography}

\end{document}